\documentclass[11pt]{article}

\usepackage[margin=1.12in]{geometry}
\usepackage{amsmath,amssymb,amsthm,mathtools}
\usepackage{microtype}
\usepackage{enumitem}
\usepackage{xcolor}
\usepackage{hyperref}
\usepackage{booktabs}
\usepackage{array}
\usepackage{listings}
\usepackage[T1]{fontenc}
\usepackage{lmodern}

\hypersetup{
  colorlinks=true,
  linkcolor=blue!55!black,
  citecolor=blue!55!black,
  urlcolor=blue!55!black
}

\lstdefinelanguage{Lean}{
  keywords={theorem,def,lemma,class,structure,where,by,exact,have,fun,intro,forall,exists,Prop,Type,Set,if,then,else,let,match,with,import,open,namespace,end,instance,noncomputable,abbrev},
  sensitive=true,
  comment=[l]{--},
  morecomment=[s]{/-}{-/},
  morestring=[b]"
}
\newtheorem{theorem}{Theorem}[section]
\newtheorem{proposition}[theorem]{Proposition}
\newtheorem{corollary}[theorem]{Corollary}

\newtheorem{definition}[theorem]{Definition}
\newtheorem{remark}[theorem]{Remark}
\newtheorem{openproblem}[theorem]{Open problem}

\newcommand{\X}{\mathcal X}
\newcommand{\B}{\mathcal B}
\newcommand{\C}{\mathcal C}
\newcommand{\R}{\mathbb R}
\newcommand{\N}{\mathbb N}

\newcommand{\one}{\mathbf 1}
\newcommand{\Emp}{\widehat L}

\newcommand{\Gap}{\Gamma}
\newcommand{\WB}{\mathrm{WB}}
\newcommand{\WBm}{\mathrm{WB}_{\mathrm{meas}}}
\newcommand{\KW}{\mathrm{KW}}

\title{Null Measurability at the Symmetrization Interface in VC Learning}
\author{Dhruv Gupta\\AI and Robotics Technology Park, Indian Institute of Science, Bengaluru\\\texttt{dhruvgupta@iisc.ac.in}}
\date{April 2026}

\begin{document}
\maketitle

\begin{abstract}
Recent work revisiting measurability in the fundamental theorem of statistical learning imposes Borel measurability of ghost-gap suprema.  We show that, at the one-sided ghost-gap interface actually used by the standard symmetrization proof, this requirement is stronger than necessary.  For any Borel-parameterized concept class on a Polish domain, the bad event ``there exists a hypothesis whose ghost empirical error exceeds its training empirical error by at least $\varepsilon/2$'' is analytic.  By Choquet capacitability, it is therefore measurable in the completion of every finite Borel measure.  We then construct a concept class whose bad event is null-measurable but not Borel, giving a strict separation from the Borel supremum condition.  Finally, we prove closure under patching, fixed and countable interpolation, and fiber-product amalgamation, showing that the weaker regularity level is stable under natural concept-class constructors.  In the realizable setting, where targets belong to the class and are measurable, these results weaken the measurability hypothesis needed by the symmetrization route from finite VC dimension to PAC learnability.  The main results and the descriptive-set-theoretic infrastructure used by them are formalized in Lean 4.
\end{abstract}

\section{Introduction}

The fundamental theorem of statistical learning identifies finite VC dimension as the exact combinatorial criterion for distribution-free PAC learnability in binary classification.  The classical implication from finite VC dimension to learnability proceeds through double sampling and symmetrization: one controls a bad event on pairs of samples and then bounds its probability using the growth function and Sauer--Shelah estimates \cite{VapnikChervonenkis1971,BEHW1989,Sauer1972,Shelah1972,SSBD2014}.  The proof is elementary in its combinatorial core, but it hides a measurability issue.  The bad event is defined by an existential quantifier over an often uncountable concept class, and an uncountable union of measurable sets need not be Borel.

Measurability problems of this kind are classical in empirical process theory.  Pollard's permissibility condition, Dudley's image-admissible Suslin classes, and the framework of van der Vaart and Wellner all record ways to keep empirical-process suprema and outer-probability arguments mathematically meaningful \cite{Pollard1984,Dudley1984,VdVW1996,VdVW2000}.  Recent work of Krapp and Wirth revisits the fundamental theorem from this perspective and extracts explicit measurability assumptions for the learning-theoretic proof, with further applications to tame model theory \cite{KrappWirth2024}.  Their intervention is important: it turns a tacit side condition in textbook proofs into a mathematical object.

This paper makes a more local point.  The one-sided ghost-gap event actually used by the standard symmetrization argument does not need to be Borel.  It is enough that the event be measurable in the completion of the relevant product probability measure.  In Lean~4 terminology~\cite{Lean4}, the correct event-level hypothesis is \texttt{NullMeasurableSet}, not \texttt{MeasurableSet}.  This weakening is not merely cosmetic.  It admits Borel-parameterized classes whose ghost bad events are analytic but not Borel, and it is stable under the same constructor operations that naturally build larger classes from smaller ones.

The mechanism is simple.  Suppose that $\Theta$ is a standard Borel space and that
\[
  e : \Theta \times \X \to \{0,1\}
\]
is jointly measurable.  For $\theta\in\Theta$, write $e_\theta(x)=e(\theta,x)$ and let $\C=\{e_\theta: \theta\in\Theta\}$.  For a target $c$, sample size $m$, and threshold $\varepsilon$, the witness set
\[
  W_{e,c,m,\varepsilon}
  = \{(\theta,S,T): \widehat L_T(e_\theta,c)-\widehat L_S(e_\theta,c)\ge \varepsilon/2\}
\]
is Borel because it is a superlevel set of a finite sum of measurable functions.  Its projection to $(S,T)$ is the bad event.  Suslin's theorem says that this projection is analytic, and Choquet capacitability implies that analytic sets in Polish spaces are universally measurable.  Thus the bad event is null-measurable under every finite Borel measure.  The proof path is
\begin{align*}
  \text{jointly measurable evaluator}
  &\Longrightarrow \text{Borel witness graph} \\
  &\Longrightarrow \text{analytic bad event}
  \Longrightarrow \text{null-measurable bad event}.
\end{align*}

The descriptive-set-theoretic step is standard, but its placement is the point.  Earlier empirical-process regularity conditions package this phenomenon globally through permissibility or image-admissible Suslin hypotheses.  Krapp and Wirth package it through Borel measurability of gap maps.  We isolate the precise event-level regularity used by the VC symmetrization proof and formalize it as an interface.  This also clarifies what is, and is not, a new mathematical claim in this paper.  The facts that analytic sets are universally measurable and that Borel projections are analytic are classical \cite{Suslin1917,Choquet1954,Kechris1995}.  The contribution here is the localization of those facts at the one-sided symmetrization interface, the strict concept-class witness against the Borel condition, the closure algebra for Borel-parameterized constructors, and the Lean formalization certifying the translation.

Our main contributions are as follows.
\begin{enumerate}[label=(\arabic*)]
\item We prove a Borel--analytic bridge theorem for one-sided ghost-gap bad events of Borel-parameterized concept classes.
\item We construct a strict separation witness: a singleton concept class over an analytic non-Borel set whose one-sided ghost bad event is analytic and null-measurable but not Borel.
\item We prove closure under patching, fixed and countable interpolation, and fiber-product amalgamation.
\item We spell out the resulting weakened measurability interface for the realizable symmetrization route from finite VC dimension to PAC learnability.
\item We give a Lean 4 formalization of the main results, including the Choquet-capacitability route from analytic sets to null-measurability~\cite{FLTKernel2026}.\footnote{Code and Lean kernel available at \url{https://github.com/Zetetic-Dhruv/formal-learning-theory-kernel/tree/v3.3.0-paper}.}
\end{enumerate}

Two boundaries are important.  First, our positive bridge is stated for measurable targets.  This is the natural interface for realizable PAC learning, where the target belongs to the concept class and hypotheses are measurable.  Extending the bridge to arbitrary nonmeasurable targets would require a different argument.  Second, we do not claim to reprove every equivalent form of the fundamental theorem.  In particular, the sample-compression direction associated with Moran and Yehudayoff is orthogonal to the measurability interface studied here \cite{MoranYehudayoff2016}.

\section{Setting and regularity notions}

Let $(\X,\B)$ be a measurable space.  A binary concept is a function $h:\X\to\{0,1\}$, and a concept class is a set $\C\subseteq\{0,1\}^\X$.  For a sample $S=(x_1,\ldots,x_m)\in\X^m$ and target concept $c:\X\to\{0,1\}$, define the empirical zero-one error
\[
  \Emp_S(h,c)=\frac{1}{m}\sum_{i=1}^m \one[h(x_i)\ne c(x_i)]
\]
for $m>0$, and set the empty-sample empirical average to $0$ when convenient.  For a pair of samples $p=(S,T)\in \X^m\times \X^m$, define the one-sided ghost gap
\[
  \Gap^c_h(p)=\Emp_T(h,c)-\Emp_S(h,c).
\]
The corresponding one-sided bad event is
\[
  E_{\C,c,m,\varepsilon}
  =\{p\in \X^m\times \X^m: \exists h\in\C\; \Gap^c_h(p)\ge \varepsilon/2\}.
\]

\begin{definition}[Event-level null-measurability]\label{def:WB}
Let $\mu$ be a probability measure on $\X$, and let $\mu^{2m}$ denote the product measure on $\X^m\times \X^m$.
\begin{enumerate}[label=(\roman*)]
\item $\WB(\C)$ holds if, for every probability measure $\mu$, every target $c:\X\to\{0,1\}$, every $m\in\N$, and every $\varepsilon>0$, the event $E_{\C,c,m,\varepsilon}$ is measurable in the completion of $\mu^{2m}$.
\item $\WBm(\C)$ holds if the same condition is required only for measurable targets $c$.
\end{enumerate}
\end{definition}

In the Lean development, the unrestricted condition is \texttt{WellBehavedVC}; the measurable-target variant is \texttt{WellBehavedVCMeasTarget}.  The distinction matters only when one quantifies over arbitrary targets.  In the realizable setting, if $c\in\C$ and all concepts in $\C$ are measurable, then $c$ is measurable and $\WBm$ is the relevant interface.

We compare this event-level condition to a Borel-level supremum condition.  Define
\[
  G_{\C,c,m}(p)=\sup_{h\in\C} \Gap^c_h(p),
\]
where the supremum is understood over the finite grid of possible empirical-gap values.  The Lean development uses the finite-grid representation to avoid relying on non-attained suprema.

\begin{definition}[Borel ghost-gap condition]
We write $\KW(\C)$ if for every target $c$ and every $m\in\N$, the map $p\mapsto G_{\C,c,m}(p)$ is Borel measurable.
\end{definition}

This condition is intentionally close to the one-sided part of the Krapp--Wirth interface, though their paper also studies absolute and uniform-convergence gap maps.  If $G_{\C,c,m}$ is Borel measurable, then $E_{\C,c,m,\varepsilon}=G_{\C,c,m}^{-1}([\varepsilon/2,\infty))$ is Borel.  The converse need not hold, and the symmetrization argument only needs the event to be integrable as an indicator against the relevant product measure.

\begin{proposition}[Realizable interface]\label{prop:realizable-interface}
Suppose the realizable PAC argument only quantifies over targets $c\in\C$, and every $h\in\C$ is measurable.  Then $\WBm(\C)$ supplies the measurability needed by the one-sided symmetrization proof.  Borel measurability of $G_{\C,c,m}$ is not used.
\end{proposition}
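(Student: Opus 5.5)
The plan is to walk through the standard one-sided symmetrization proof step by step and record, at each point, which sets must be measurable and with respect to which measure. Fix a target $c\in\C$, a sample size $m$, and $\varepsilon>0$. Since every $h\in\C$ is measurable, $c$ is measurable, so $\WBm(\C)$ applies to $c$ and gives that $E_{\C,c,m,\varepsilon}$ is measurable in the completion $\overline{\mu^{2m}}$. Because a complete measure assigns a well-defined value to every null-measurable set, $\mu^{2m}(E_{\C,c,m,\varepsilon})$ is unambiguous. This is the only quantity over the uncountable class that the argument has to evaluate.

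\textbf{Step 1 (symmetrization inequality).} The goal is
\[
\mu^m\{S:\exists h\in\C,\ \Emp_S(h,c)=0,\ L_\mu(h,c)\ge\varepsilon\}\le 2\,\mu^{2m}(E_{\C,c,m,\varepsilon})
\]
for $m\varepsilon\ge 8$. The one-sample bad event may itself fail to be measurable, so I would read its left side as an outer measure. For each $S$ in that event, choose a witness $h_S$. Since $h_S$ is measurable, the set $\{T:\Emp_T(h_S,c)\ge\varepsilon/2\}$ is Borel, and Chebyshev gives it $\mu^m$-probability at least $1/2$. Such an $(S,T)$ lies in $E_{\C,c,m,\varepsilon}$ because $\Emp_S(h_S,c)=0$. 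Now apply Tonelli to the indicator of $E_{\C,c,m,\varepsilon}$. Tonelli holds for the completed product measure provided the indicator is null-measurable, and this is exactly what $\WBm$ supplies. The sections are measurable for a.e.\ $S$, and one gets $\int \mu^m(E_S)\,d\mu^m(S)=\mu^{2m}(E)$, which dominates $\tfrac12$ times the outer measure of the one-sample event. Measurability of the gap map $G_{\C,c,m}$ is never invoked here.

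\textbf{Step 2 (swapping and counting).} The event $E$ is invariant in distribution under the coordinate swaps $\sigma$ that exchange $x_i$ and $y_i$. Each $\sigma$ is a measurable, measure-preserving bijection, so it also preserves the completion. Hence $\mu^{2m}(E)=\mathbb{E}_\sigma\overline{\mu^{2m}}(\sigma^{-1}E)=\int \Pr_\sigma[\sigma(p)\in E]\,d\overline{\mu^{2m}}(p)$. The integrand is a finite average of indicators of null-measurable sets, so it is null-measurable. For each fixed $p$, the inner probability is bounded by restricting $\C$ to the $2m$ points of $p$. Sauer--Shelah bounds the number of patterns by the growth function, and Hoeffding bounds each pattern. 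All of this is pointwise in $p$ and needs no measurability.

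\textbf{Main obstacle.} The delicate point is Step 1: justifying Fubini/Tonelli for a set that is only null-measurable, and handling the non-measurable one-sample event without a measurable selection of $h_S$. I would first use the completed-product Tonelli theorem, which holds because the completion of $\mu^m\otimes\mu^m$ contains the product of the completions. I would then phrase the conclusion with outer measure, which avoids needing any selection. Finally, I would observe that each step requires only that indicators of $E$ and its swaps be integrable against $\overline{\mu^{2m}}$, so Borel measurability of $G_{\C,c,m}$ is never used.
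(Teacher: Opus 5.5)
Your proposal is correct and follows the paper's route. You note that $c\in\C$ is measurable, so $\WBm(\C)$ applies; you then check that each step of the symmetrization proof (integrating the indicator of $E_{\C,c,m,\varepsilon}$, averaging over swaps, and monotonicity/outer-measure domination) needs only completion-measurability of that event, never Borel measurability of $G_{\C,c,m}$. Your Tonelli and outer-measure details fill in what the paper compresses into ``indicators of null-measurable sets are a.e.\ measurable.''

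One small correction: the completed-product Tonelli theorem does not follow from the completion of $\mu^m\otimes\mu^m$ containing the product of the completions. The right justification is that $E_{\C,c,m,\varepsilon}$ differs from a Borel set by a subset of a Borel $\mu^{2m}$-null set, and the $S$-sections of that null set are $\mu^m$-null for a.e.\ $S$ by ordinary Tonelli.
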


\begin{proof}
The measurable event inserted into the double-sample proof is exactly $E_{\C,c,m,\varepsilon}$ for a target $c\in\C$.  Under the hypothesis, such $c$ is measurable.  The proof integrates the indicator of this event and uses finite sums, exchangeability, and monotonicity.  These steps require the set to be measurable in the completion of the product measure, since indicators of null-measurable sets are a.e. measurable and their lower Lebesgue integrals have the expected event probabilities.  No step requires a Borel measurable supremum map.
\end{proof}

\section{The Borel--analytic bridge}

Assume now that $\X$ is a Polish space with its Borel $\sigma$-algebra.  Let $\Theta$ be a standard Borel space and let
\[
  e:\Theta\times\X\to\{0,1\}
\]
be jointly measurable.  Write $e_\theta(x)=e(\theta,x)$ and $\C_e=\{e_\theta:\theta\in\Theta\}$.

For $m\in\N$ and $\varepsilon\in\R$, define the witness set
\[
  W_{e,c,m,\varepsilon}
  =\{(\theta,p)\in\Theta\times(\X^m\times\X^m):
      \Gap^c_{e_\theta}(p)\ge\varepsilon/2\},
\]
and its projection
\[
  E_{e,c,m,\varepsilon}
  =\{p: \exists \theta\in\Theta\ (\theta,p)\in W_{e,c,m,\varepsilon}\}.
\]

\begin{theorem}[Borel--analytic bridge]\label{thm:borel-analytic-bridge}
Let $\X$ be Polish, $\Theta$ standard Borel, and $e:\Theta\times\X\to\{0,1\}$ jointly measurable.  For every measurable target $c$, sample size $m$, and threshold $\varepsilon$:
\begin{enumerate}[label=(\alph*)]
\item $W_{e,c,m,\varepsilon}$ is Borel in $\Theta\times(\X^m\times\X^m)$;
\item $E_{e,c,m,\varepsilon}$ is analytic in $\X^m\times\X^m$;
\item for every finite Borel measure $\nu$ on $\X^m\times\X^m$, $E_{e,c,m,\varepsilon}$ is measurable in the completion of $\nu$.
\end{enumerate}
Consequently $\C_e$ satisfies $\WBm$.
\end{theorem}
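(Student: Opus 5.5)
The proof goes through (a), (b), (c) in order; each step feeds the next.

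\textbf{Step (a).} For $p=(S,T)$ with $S=(x_1,\dots,x_m)$ and $T=(y_1,\dots,y_m)$, write the gap as a finite sum:
\[
  \Gap^c_{e_\theta}(p)=\frac1m\sum_{i=1}^m \one[e(\theta,y_i)\ne c(y_i)]-\frac1m\sum_{i=1}^m \one[e(\theta,x_i)\ne c(x_i)].
\]
For $m=0$ the gap is the constant $0$, and $W$ is either everything or empty.
\begin{itemize}
\item Each map $(\theta,p)\mapsto e(\theta,x_i)$ is $e$ composed with the measurable map $(\theta,p)\mapsto(\theta,x_i)$. A coordinate projection is measurable for the product $\sigma$-algebra, which is the Borel $\sigma$-algebra here because all spaces involved are second countable or standard Borel.
\item The map $(\theta,p)\mapsto c(x_i)$ is measurable because $c$ is a measurable target. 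This is the only place the hypothesis on $c$ is used.
\item The set $\{e(\theta,x_i)\ne c(x_i)\}$ is then the preimage of the off-diagonal of $\{0,1\}^2$, so its indicator is measurable.
\item A finite linear combination of measurable real functions is measurable. Hence $W_{e,c,m,\varepsilon}$, the preimage of $[\varepsilon/2,\infty)$, is Borel.
\end{itemize}

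\textbf{Step (b).} Since $\Theta$ is standard Borel, fix a Polish topology on $\Theta$ generating its $\sigma$-algebra. Then $\Theta\times(\X^m\times\X^m)$ is Polish, and its Borel $\sigma$-algebra equals the product $\sigma$-algebra.
\begin{itemize}
\item The projection $\pi$ onto $\X^m\times\X^m$ is continuous.
\item Borel subsets of Polish spaces are analytic, and continuous images of analytic sets are analytic.
\item So $E_{e,c,m,\varepsilon}=\pi(W_{e,c,m,\varepsilon})$ is analytic. This is Suslin's theorem \cite{Suslin1917,Kechris1995}.
\item If $\Theta=\emptyset$, then $E$ is empty and the claim is trivial.
\end{itemize}

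\textbf{Step (c).} This is the step I expect to be the main obstacle, at least for a formal proof. I would use Choquet capacitability.
\begin{itemize}
\item Fix a finite Borel measure $\nu$. Its outer measure $\nu^*$ is a capacity for which compact sets are approximable from above by open sets.
\item Choquet's theorem gives $\nu^*(A)=\sup\{\nu(K):K\subseteq A\text{ compact}\}$ for analytic $A$.
\item Choose compact $K_n\subseteq E$ with $\nu(K_n)\to\nu^*(E)$. Then $B=\bigcup_n K_n$ is Borel, $B\subseteq E$, and $\nu(B)=\nu^*(E)$.
\item Take a Borel hull $H\supseteq E$ with $\nu(H)=\nu^*(E)$. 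Then $E\setminus B\subseteq H\setminus B$, which is $\nu$-null.
\item So $E=B\cup(E\setminus B)$ is a Borel set plus a subset of a null set, hence measurable in the completion of $\nu$.
\end{itemize}
The delicate part is verifying the capacity axioms for $\nu^*$, in particular continuity along increasing sequences of arbitrary sets. Alternatively one could cite \cite{Choquet1954,Kechris1995}.

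\textbf{Consequence.} Fix a probability $\mu$, a measurable $c$, $m$, and $\varepsilon>0$. The event $E_{\C_e,c,m,\varepsilon}$ coincides with $E_{e,c,m,\varepsilon}$, because $h\in\C_e$ means exactly that $h=e_\theta$ for some $\theta$. Applying (c) with $\nu=\mu^{2m}$ gives $\WBm(\C_e)$. Here $\mu^{2m}$ is a Borel measure on $\X^m\times\X^m$ because the product $\sigma$-algebra agrees with the Borel $\sigma$-algebra for Polish $\X$.
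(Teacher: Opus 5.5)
Your proposal is correct and follows the paper's route. You prove (a) by showing the ghost-gap map is a measurable finite sum of indicators, (b) by Suslin's projection theorem, and (c) by Choquet capacitability of analytic sets, then specialize $\nu$ to the product measure. Your (c) is slightly more careful than the paper's sketch: you spell out the inner Borel set $B=\bigcup_n K_n\subseteq E$ and the Borel hull $H\supseteq E$ with $\nu(H\setminus B)=0$, which is exactly what places $E$ in the completion.
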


\begin{proof}
For (a), consider the map
\[
  (\theta,S,T)\mapsto \Emp_T(e_\theta,c)-\Emp_S(e_\theta,c).
\]
Each summand has the form
\[
  (\theta,S,T)\mapsto \one[e(\theta,T_i)\ne c(T_i)]
\]
or the corresponding term with $S_i$.  The coordinate projections to $T_i$ and $S_i$ are measurable, $e$ and $c$ are measurable, equality in the two-point measurable space is measurable, and finite sums preserve measurability.  Thus the ghost-gap map is measurable as a real-valued function.  The witness set is its preimage of $[\varepsilon/2,\infty)$, hence Borel.

For (b), the sample space $\X^m\times\X^m$ is again Polish, and $\Theta\times(\X^m\times\X^m)$ is a standard Borel space.  The bad event is the image of the Borel set $W_{e,c,m,\varepsilon}$ under the measurable projection $(\theta,p)\mapsto p$.  Suslin's projection theorem gives analyticity.

For (c), analytic subsets of Polish spaces are universally measurable.  One standard proof is via Choquet capacitability: analytic sets are capacitable for every finite Borel measure, hence admit Borel supersets whose excess measure is arbitrarily small, and therefore are measurable in the completion.  Applying this to $E_{e,c,m,\varepsilon}$ under the finite measure $\nu$ proves null-measurability.  Taking $\nu$ to be the product measure generated by any probability measure on $\X$ gives $\WBm(\C_e)$.
\end{proof}

\begin{remark}[Relation to earlier regularity notions]
Theorem~\ref{thm:borel-analytic-bridge} is a local event-level version of classical empirical-process regularity.  It is closest in spirit to image-admissible Suslin and permissibility assumptions, and to preservation theorems for Glivenko--Cantelli classes under measurable composition.  The difference is that we track the descriptive-set-theoretic stratum reached by the specific one-sided ghost event used in the symmetrization proof.
\end{remark}

\section{A strict separation from Borel measurability}

The previous theorem shows that null-measurability is sufficient for Borel-parameterized classes.  We now show that it is genuinely weaker than Borel measurability of the bad event.

Let $A\subseteq\R$ be analytic but non-Borel.  Such sets exist in ZFC in every uncountable Polish space.  Define
\[
  \C_A=\{0\}\cup\{\one_{\{a\}}: a\in A\}.
\]
Each hypothesis is Borel measurable, since singletons in $\R$ are closed.

\begin{theorem}[Strict separation]\label{thm:strict-separation}
Assume $A\subseteq\R$ is analytic and non-Borel.  Then the class $\C_A$ satisfies $\WBm$, but it does not satisfy $\KW$.
\end{theorem}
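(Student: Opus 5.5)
We prove the two halves separately: $\WBm(\C_A)$ comes from Theorem~\ref{thm:borel-analytic-bridge} via a Borel parameterization of $\C_A$, and the failure of $\KW$ comes from a one-point section argument.

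\emph{Step 1: $\WBm(\C_A)$.} The subtlety is that $A$ is not standard Borel, so it cannot serve as $\Theta$ directly; this is the step we expect to be the main obstacle. We use the classical representation of nonempty analytic sets: since $A$ is non-Borel it is nonempty, so there is a continuous surjection $f:\N^\N\to A$.

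Let $\Theta=\N^\N\sqcup\{\ast\}$, a Polish and hence standard Borel space. Define
\[
  e(\theta,x)=\one[x=f(\theta)] \quad\text{for } \theta\in\N^\N, \qquad e(\ast,x)=0 .
\]
On $\N^\N\times\R$, the set $\{e=1\}$ is the graph $\{(\theta,x): x=f(\theta)\}$, which is closed because $f$ is continuous and $\R$ is Hausdorff. On the piece $\{\ast\}\times\R$, the function $e$ is constantly $0$. Hence $e$ is jointly Borel.

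By construction $\C_e=\{0\}\cup\{\one_{\{f(\theta)\}}:\theta\in\N^\N\}=\C_A$, using surjectivity of $f$ onto $A$. Theorem~\ref{thm:borel-analytic-bridge} then gives $\WBm(\C_A)$ directly.

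\emph{Step 2: failure of $\KW$.} Since $\KW$ quantifies over all targets and all $m$, it suffices to exhibit one pair for which $G_{\C_A,c,m}$ is not Borel. Take $c=0$ and $m=1$, so $p=(s,t)\in\R\times\R$.

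For $h=\one_{\{a\}}$ the gap is $\Gap^0_h(s,t)=\one[t=a]-\one[s=a]$, and for $h=0$ it is $0$. Taking the supremum over the class therefore gives
\[
  G(s,t)=\begin{cases}1 & t\in A,\ s\neq t,\\ 0&\text{otherwise.}\end{cases}
\]
The value $0$ is always attained by $h=0$. The value $1$ is attained exactly when some $a\in A$ equals $t$ and differs from $s$. This is consistent with the finite-grid interpretation of the supremum.

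\emph{Step 3: contradiction.} Suppose $G$ were Borel. Then $G^{-1}(\{1\})=\{(s,t): t\in A,\ s\ne t\}$ would be a Borel subset of $\R^2$. Fix any $s_0\in\R$. The section $t\mapsto(s_0,t)$ is continuous, so the section $\{t:(s_0,t)\in G^{-1}(\{1\})\}=A\setminus\{s_0\}$ would be Borel. Then $A=(A\setminus\{s_0\})\cup(A\cap\{s_0\})$ would be Borel as well, contradicting the hypothesis. Hence $\KW(\C_A)$ fails.

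The same section argument also shows that the bad event $E_{\C_A,0,1,\varepsilon}$ with $\varepsilon=2$ equals this non-Borel set. So the event is null-measurable by Step 1 but not Borel, which is the promised strict separation.
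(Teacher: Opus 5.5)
Your proof is correct and follows essentially the paper's route: you parameterize $\C_A$ by a standard Borel space mapping onto $A$ (your $\N^\N\sqcup\{\ast\}$ with a continuous surjection plays the role of the paper's $\{0,1\}\times\Theta$ with a measurable $g$), apply Theorem~\ref{thm:borel-analytic-bridge}, and then show that the $m=1$, $c\equiv 0$ bad event $\{(s,t): t\in A,\ s\neq t\}$ is non-Borel by a section argument. The differences are cosmetic: you take the section at an arbitrary $s_0$ and add back $A\cap\{s_0\}$, whereas the paper takes it at a point $a_0\notin A$; and you use the level set $G^{-1}(\{1\})$ (equivalently $\varepsilon=2$) rather than the superlevel set at $1/2$ (from $\varepsilon=1$).
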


\begin{proof}
Because $A$ is analytic, choose a standard Borel parameter space $\Theta$ and a measurable map $g:\Theta\to\R$ with range $A$.  Parameterize $\C_A$ by $\{0,1\}\times\Theta$ via
\[
  e(b,\theta)(x)=\begin{cases}
  \one[x=g(\theta)] & b=1,\\
  0 & b=0.
  \end{cases}
\]
The evaluation map $((b,\theta),x)\mapsto e(b,\theta)(x)$ is measurable: the singleton equality relation $x=g(\theta)$ is Borel in $\Theta\times\R$, and the two branches are constant or singleton indicators.  Hence Theorem~\ref{thm:borel-analytic-bridge} gives $\WBm(\C_A)$.

It remains to show that the Borel condition fails.  Take $m=1$, target $c\equiv0$, and threshold $\varepsilon=1$.  Let
\[
  P_A=\{(x,y)\in\R^2: y\in A\text{ and }x\ne y\}.
\]
For a pair $(x,y)$ of training and ghost points, the hypothesis $\one_{\{a\}}$ has ghost error minus training error equal to $1$ exactly when $y=a$ and $x\ne a$.  Therefore the bad event for $\C_A$ at this $m,c,\varepsilon$ is precisely the preimage of $P_A$ under the map from the one-point training/ghost sample pair to its two coordinates.

The set $P_A$ is analytic because it is $(\R\times A)\cap\{(x,y):x\ne y\}$, the intersection of an analytic set with a Borel set.  It is not Borel.  Indeed, since $A$ is non-Borel, choose $a_0\notin A$.  If $P_A$ were Borel, then the measurable section map $y\mapsto(a_0,y)$ would pull it back to $A$, contradicting non-Borelicity.  Thus the corresponding bad event is analytic but non-Borel.  If the one-sided supremum map $G_{\C_A,c,1}$ were Borel, then its superlevel set at $1/2$ would be Borel, contradiction.  Hence $\KW(\C_A)$ fails.
\end{proof}

\begin{remark}
This witness is related in spirit to the Dudley--Durst~\cite{DudleyDurst1981} and Pestov~\cite{Pestov2011} discussions of measurability pathologies in VC theory, but the role is different.  The present witness targets the Borel/non-Borel gap of the one-sided symmetrization bad event.  It is not a new set-theoretic construction of an analytic non-Borel set; it is a learning-theoretic placement of that classical descriptive-set-theoretic object.
\end{remark}

\section{Closure under concept-class constructors}

A single separation witness could be dismissed as pathology.  The more important point is closure: the null-measurable endpoint is stable under natural ways of forming new concept classes from old ones.

\subsection{Patching}

Let $e_1:\Theta_1\times\X\to\{0,1\}$, $e_2:\Theta_2\times\X\to\{0,1\}$, and $r:P\times\X\to\{0,1\}$ be jointly measurable maps with standard Borel parameter spaces.  Define
\[
  \mathrm{Patch}(e_1,e_2,r)
  =\left\{x\mapsto
    \begin{cases}
    e_1(\theta_1,x),& r(\rho,x)=1,\\
    e_2(\theta_2,x),& r(\rho,x)=0,
    \end{cases}
    : \theta_1\in\Theta_1,\theta_2\in\Theta_2,\rho\in P\right\}.
\]

\begin{theorem}[Patching preserves $\WBm$]\label{thm:patching}
Under the assumptions above, $\mathrm{Patch}(e_1,e_2,r)$ satisfies $\WBm$.
\end{theorem}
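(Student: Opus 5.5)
The plan is to reduce the statement to Theorem~\ref{thm:borel-analytic-bridge} by exhibiting $\mathrm{Patch}(e_1,e_2,r)$ as a Borel-parameterized class $\C_{e}$ for a single jointly measurable evaluator $e$ on a standard Borel parameter space. Take
\[
  \Theta=\Theta_1\times\Theta_2\times P,
\]
which is standard Borel as a finite product of standard Borel spaces. Define $e:\Theta\times\X\to\{0,1\}$ by
\[
  e((\theta_1,\theta_2,\rho),x)= r(\rho,x)\,e_1(\theta_1,x)+(1-r(\rho,x))\,e_2(\theta_2,x).
\]
By construction, $\C_e=\{e_\theta:\theta\in\Theta\}$ is exactly $\mathrm{Patch}(e_1,e_2,r)$, since $r$ takes only the values $0$ and $1$.

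The main step is to check joint measurability of $e$. The maps $((\theta_1,\theta_2,\rho),x)\mapsto(\theta_1,x)$, $\mapsto(\theta_2,x)$ and $\mapsto(\rho,x)$ are measurable coordinate projections. Composing them with the jointly measurable $e_1,e_2,r$ gives three measurable $\{0,1\}$-valued functions on $\Theta\times\X$. The expression above is built from these by finite sums and products of real-valued measurable functions, so it is measurable; it takes values in $\{0,1\}$, hence it is measurable into the two-point space. Equivalently, $e^{-1}(1)$ is the Borel set
\[
  \bigl(\{r=1\}\cap\{e_1=1\}\bigr)\cup\bigl(\{r=0\}\cap\{e_2=1\}\bigr),
\]
with each set pulled back along the relevant projection.

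With $e$ in hand, Theorem~\ref{thm:borel-analytic-bridge} applies directly. For every measurable target $c$, every $m$ and every $\varepsilon$, the bad event is analytic and measurable in the completion of every finite Borel measure, in particular of $\mu^{2m}$. Since the bad event depends only on the class $\C_e=\mathrm{Patch}(e_1,e_2,r)$ and not on the parameterization, this yields $\WBm(\mathrm{Patch}(e_1,e_2,r))$.

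There is no real obstacle. The only point needing care is the standing assumption that $\X$ is Polish, which Theorem~\ref{thm:borel-analytic-bridge} requires. If that were not in force, one would carry over the same hypothesis. Otherwise the argument is a routine measurability check.
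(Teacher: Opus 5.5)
Your proposal is correct and essentially matches the paper's proof: both use the product parameter space $\Theta_1\times\Theta_2\times P$, show the patched evaluator is jointly measurable from the Borel routing set $\{r=1\}$ and the branches composed with coordinate projections, and then apply Theorem~\ref{thm:borel-analytic-bridge}. Your arithmetic formula $r e_1+(1-r)e_2$ and your decomposition of $e^{-1}(1)$ are just an explicit version of the measurable piecewise-map step the paper cites.
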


\begin{proof}
Use the product parameter space $\Theta_1\times\Theta_2\times P$.  The patched evaluator
\[
  ((\theta_1,\theta_2,\rho),x)
  \mapsto
  \begin{cases}
  e_1(\theta_1,x),& r(\rho,x)=1,\\
  e_2(\theta_2,x),& r(\rho,x)=0
  \end{cases}
\]
is jointly measurable by the measurable piecewise-map theorem: the routing set $\{(\rho,x):r(\rho,x)=1\}$ is Borel, and both branches are jointly measurable after composition with coordinate projections.  Apply Theorem~\ref{thm:borel-analytic-bridge}.
\end{proof}

\subsection{Interpolation}

For a fixed measurable set $A\subseteq\X$, the fixed-region interpolation of two classes is
\[
  \mathrm{Interp}_A(\C_1,\C_2)=\{x\mapsto h_1(x)\text{ on }A,
    \;h_2(x)\text{ on }A^c: h_1\in\C_1,h_2\in\C_2\}.
\]
For a countable measurable family $(A_n)_{n\in\N}$, define the countable-router interpolation by allowing the router to choose one of the $A_n$.

\begin{corollary}[Fixed and countable interpolation]\label{cor:interpolation}
If $\C_1$ and $\C_2$ are Borel-parameterized by jointly measurable evaluators over standard Borel parameter spaces, then every fixed-region interpolation and every countable measurable-family interpolation of $\C_1$ and $\C_2$ satisfies $\WBm$.
\end{corollary}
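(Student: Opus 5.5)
The plan is to reduce both cases to Theorem~\ref{thm:patching}, and through it to Theorem~\ref{thm:borel-analytic-bridge}, by presenting each interpolation as a patch whose router is jointly measurable over a standard Borel parameter space. Write the given parameterizations as $e_1:\Theta_1\times\X\to\{0,1\}$ and $e_2:\Theta_2\times\X\to\{0,1\}$, jointly measurable with $\C_i=\{(e_i)_{\theta}:\theta\in\Theta_i\}$. Throughout, $\X$ is Polish, so that the bridge theorem applies.

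For the fixed-region case I will take the one-point parameter space $P=\{\ast\}$, which is trivially standard Borel, and the router $r(\ast,x)=\one_A(x)$. This router is jointly measurable because it factors through the projection to $\X$ and $A$ is measurable. The patched class $\mathrm{Patch}(e_1,e_2,r)$ consists exactly of the maps that equal $h_1$ on $A$ and $h_2$ on $A^c$, with $h_i$ ranging over $\C_i$, so it coincides with $\mathrm{Interp}_A(\C_1,\C_2)$. Theorem~\ref{thm:patching} then gives $\WBm$.

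For the countable case I will take $P=\N$ with the discrete $\sigma$-algebra. This is standard Borel, since it is a countable discrete Polish space. The router is $r(n,x)=\one_{A_n}(x)$. To verify joint measurability, fix $U\subseteq\{0,1\}$. Then
\[
  r^{-1}(U)=\bigcup_{n\in\N}\{n\}\times\{x:\one_{A_n}(x)\in U\},
\]
which is a countable union of measurable rectangles and hence measurable in $P\times\X$. By construction, the patched class is the union over $n$ of the classes $\mathrm{Interp}_{A_n}(\C_1,\C_2)$, which is the countable-router interpolation. Applying Theorem~\ref{thm:patching} once more completes this case.

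I expect the only real obstacle to be the joint measurability of the countable router. The point to check is that the section-wise measurability of each $A_n$ assembles into product measurability. Discreteness of $\N$ handles this through the countable decomposition displayed above. I also need to confirm that the equality between the patched class and the interpolation class holds as sets of functions, since $\WBm$ depends only on the set $\C$ and not on its parameterization. Both directions are immediate from the definitions.
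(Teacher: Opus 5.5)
Your proposal is correct and follows the paper's proof essentially verbatim. Both cases are realized as instances of Theorem~\ref{thm:patching}: a one-point router parameter handles the fixed region, and router parameter $n\in\N$ with $r(n,x)=\one[x\in A_n]$ handles the countable family, with joint measurability coming from the countable decomposition over the singletons $\{n\}$ (your explicit check that the patched class coincides with the interpolation class as a set of functions is a detail the paper leaves implicit).
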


\begin{proof}
A fixed measurable region is the special case of Theorem~\ref{thm:patching} with one router parameter.  A countable measurable family is the special case with router parameter $n\in\N$ and routing set $A_n$.  The map $(n,x)\mapsto \one[x\in A_n]$ is measurable because its graph is the countable union $\bigcup_n \{n\}\times A_n$.  The conclusion follows from Theorem~\ref{thm:patching}.
\end{proof}

\subsection{Fiber-product amalgamation}

Let $\pi_1:\Theta_1\to S$ and $\pi_2:\Theta_2\to S$ be measurable maps into a standard Borel space $S$, and let
\[
  m : \Theta_1\times\Theta_2\times\X\to\{0,1\}
\]
be jointly measurable.  The fiber-product amalgamation class is
\[
  \mathrm{Amal}(\pi_1,\pi_2,m)
  =\{x\mapsto m(\theta_1,\theta_2,x): \pi_1(\theta_1)=\pi_2(\theta_2)\}.
\]

\begin{theorem}[Amalgamation preserves $\WBm$]\label{thm:amalgamation}
Suppose $\Theta_1,\Theta_2,S$ are standard Borel, $\pi_1,\pi_2$ are measurable, and $m$ is jointly measurable.  Then $\mathrm{Amal}(\pi_1,\pi_2,m)$ satisfies $\WBm$.
\end{theorem}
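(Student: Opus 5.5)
The plan is to reduce to Theorem~\ref{thm:borel-analytic-bridge} by finding a standard Borel parameter space that indexes the amalgamated class through a jointly measurable evaluator. The natural parameter space is the fiber product
\[
  \Theta_{12}=\{(\theta_1,\theta_2)\in\Theta_1\times\Theta_2:\ \pi_1(\theta_1)=\pi_2(\theta_2)\},
\]
with evaluator $e((\theta_1,\theta_2),x)=m(\theta_1,\theta_2,x)$. By construction $\C_e=\mathrm{Amal}(\pi_1,\pi_2,m)$, so it suffices to check that $\Theta_{12}$ is standard Borel and that $e$ is jointly measurable on $\Theta_{12}\times\X$.

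\textbf{Step 1: $\Theta_{12}$ is Borel in $\Theta_1\times\Theta_2$.} The map $(\theta_1,\theta_2)\mapsto(\pi_1(\theta_1),\pi_2(\theta_2))$ is measurable into $S\times S$. The diagonal $\Delta_S$ is Borel because $S$ is standard Borel: realize $S$ as a Polish space, where the diagonal is closed, or note that it is countably separated. Hence $\Theta_{12}$ is the preimage of $\Delta_S$, which is Borel.

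\textbf{Step 2: $\Theta_{12}$ is standard Borel.} A Borel subset of a standard Borel space, equipped with the subspace $\sigma$-algebra, is standard Borel. This is the classical fact that Borel subsets of Polish spaces admit finer Polish topologies with the same Borel sets (Kechris). The joint evaluator is the restriction of the jointly measurable $m$ to $\Theta_{12}\times\X$, that is, $m$ composed with the measurable inclusion, so it is measurable.

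\textbf{Step 3: apply the bridge.} Theorem~\ref{thm:borel-analytic-bridge} then yields $\WBm(\mathrm{Amal}(\pi_1,\pi_2,m))$.

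The one delicate case is $\Theta_{12}=\emptyset$. The empty space is still standard Borel and the bad event is then empty, so this case needs no separate argument, but it may need care in the formalization if parameter spaces are assumed nonempty.

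\textbf{Expected main obstacle.} The only real obstacle is Step 2: the subtype-inherits-standard-Borel fact. Should that be unavailable in usable form, I would instead avoid subtypes. Parameterize by all of $\Theta_1\times\Theta_2$ and fold the fiber condition into the witness set:
\[
  W'=\{(\theta_1,\theta_2,p):\ (\theta_1,\theta_2)\in\Theta_{12},\ \Gap^c_{m(\theta_1,\theta_2,\cdot)}(p)\ge\varepsilon/2\}.
\]
This is the intersection of two Borel sets, so it is Borel, and its projection is exactly the bad event. Steps (b) and (c) of the bridge proof then apply verbatim.
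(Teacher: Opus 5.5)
Your proposal is correct and follows essentially the same route as the paper: show that the agreement relation is Borel as the preimage of the diagonal of $S\times S$, note that the resulting subtype is standard Borel, restrict the jointly measurable merge evaluator to it, and invoke Theorem~\ref{thm:borel-analytic-bridge}. Your fallback of folding the fiber condition into the witness set over $\Theta_1\times\Theta_2$ is also valid, but it is not needed.
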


\begin{proof}
The agreement relation
\[
  R=\{(\theta_1,\theta_2):\pi_1(\theta_1)=\pi_2(\theta_2)\}
\]
is Borel.  Indeed, it is the preimage of the diagonal in $S\times S$ under $(\theta_1,\theta_2)\mapsto(\pi_1(\theta_1),\pi_2(\theta_2))$, and the diagonal of a standard Borel space is measurable.  The measurable subtype $R$ is again a standard Borel space.  Restrict the merge evaluator to $R\times\X$; joint measurability is preserved under the measurable inclusion $R\hookrightarrow\Theta_1\times\Theta_2$.  The result follows from Theorem~\ref{thm:borel-analytic-bridge}.
\end{proof}

\begin{remark}[Closure target]
Theorems~\ref{thm:patching}--\ref{thm:amalgamation} do not say that Borel measurability of the bad event is preserved.  They identify the stable positive target: after existential elimination of parameters, analytic/null-measurable is what one should expect.  The strict witness in Theorem~\ref{thm:strict-separation} shows that forcing all such events back into the Borel $\sigma$-algebra is not generally possible.
\end{remark}

\section{Consequence for the symmetrization route}

The symmetrization proof of the implication
\[
  \operatorname{VCdim}(\C)<\infty \quad\Longrightarrow\quad \C\text{ is PAC learnable}
\]
controls the probability of the one-sided ghost event $E_{\C,c,m,\varepsilon}$.  The event-level regularity used by this argument is exactly completion-measurability of $E_{\C,c,m,\varepsilon}$ under the relevant product measure.

\begin{corollary}[Weakened realizable measurability interface]\label{cor:weakened-interface}
Let $\X$ be Polish and let $\C\subseteq\{0,1\}^\X$ be a concept class of measurable hypotheses.  In the realizable setting, the standard symmetrization route from finite VC dimension to PAC learnability requires $\WBm(\C)$, not the stronger Borel condition $\KW(\C)$.  In particular, every Borel-parameterized class, and every class obtained from such classes by the patching, fixed/countable interpolation, and fiber-product amalgamation operations above, satisfies the event-level measurability needed by that route.
\end{corollary}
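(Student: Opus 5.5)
The corollary combines three earlier results, and I would prove its three assertions in order. The first says that $\WBm(\C)$ suffices. This is Proposition~\ref{prop:realizable-interface}. In the realizable setting the argument only uses targets $c\in\C$, and these are measurable because every hypothesis is. The double-sample argument integrates the indicator of $E_{\C,c,m,\varepsilon}$ against $\mu^{2m}$. It then uses exchangeability under coordinate swaps between $S$ and $T$ and a growth-function bound, and none of these steps needs more than completion-measurability of that event. I would spell this out in three points. Swapping coordinates is a measure-preserving Borel bijection of $\X^m\times\X^m$. Such a map sends $\mu^{2m}$-null-measurable sets to null-measurable sets of the same completed measure. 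Hence the symmetrized probabilities are well-defined and equal, and the Sauer--Shelah counting is carried out pointwise for a fixed double sample.

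The second assertion says that $\KW(\C)$ is not required. I would read it as strict non-necessity and cite Theorem~\ref{thm:strict-separation}. The class $\C_A$ consists of measurable hypotheses on the Polish space $\R$ and satisfies $\WBm$, so the symmetrization route applies to it, yet $\KW(\C_A)$ fails. Thus $\KW$ is a strictly stronger hypothesis than what the route consumes. The point to check is that $\C_A$ has finite VC dimension (it is at most $1$, since singleton indicators cannot shatter two points). This shows that the separation is relevant inside the domain of the implication and is not vacuous.

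The third assertion is the ``in particular'' clause. For a Borel-parameterized class it is Theorem~\ref{thm:borel-analytic-bridge}, and for classes built by the constructors it is Theorem~\ref{thm:patching}, Corollary~\ref{cor:interpolation}, and Theorem~\ref{thm:amalgamation}. The main obstacle is iterated construction, for example patching an amalgamated class. Those theorems are stated for inputs given by jointly measurable evaluators over standard Borel spaces, not for inputs merely satisfying $\WBm$. So I would check that each constructor's proof actually outputs a new jointly measurable evaluator over a standard Borel parameter space: the product space $\Theta_1\times\Theta_2\times P$, the space $\N$ for countable routers, and the Borel subset $R$. Then induction on the construction shows that every derived class remains Borel-parameterized. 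A final application of Theorem~\ref{thm:borel-analytic-bridge} then gives $\WBm$, and the first step gives the required measurability.
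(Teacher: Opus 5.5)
Your proposal is correct and follows the paper's proof: sufficiency of $\WBm(\C)$ comes from Proposition~\ref{prop:realizable-interface}, and the ``in particular'' clause comes from Theorem~\ref{thm:borel-analytic-bridge} together with Theorems~\ref{thm:patching}, \ref{thm:amalgamation} and Corollary~\ref{cor:interpolation}. Your additions go beyond what the paper writes and are sound: citing Theorem~\ref{thm:strict-separation}, with the check that $\C_A$ has VC dimension at most $1$, makes ``not $\KW(\C)$'' a strict statement, and your induction (showing each constructor again outputs a jointly measurable evaluator over a standard Borel space) is what covers iterated constructions, since the closure theorems as stated only conclude $\WBm$.
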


\begin{proof}
By Proposition~\ref{prop:realizable-interface}, the target $c$ appearing in the realizable argument is measurable and the symmetrization proof only needs null-measurability of the one-sided ghost bad event.  Theorem~\ref{thm:borel-analytic-bridge} supplies this condition for Borel-parameterized classes, and Theorems~\ref{thm:patching}--\ref{thm:amalgamation} supply it for the closure constructions.  The probability bound itself is the standard VC symmetrization bound once the event is legitimate as an integrable indicator.
\end{proof}

\section{Discussion and open problems}

The paper leaves several seams deliberately visible.

\paragraph{Measurable targets.}
The positive bridge is naturally stated for measurable targets.  This is the right theorem for the realizable setting.  Extending the bridge to arbitrary targets would require either a different coding of targets or a regularity condition on the ambient space strong enough to make all Boolean concepts measurable.

\paragraph{One-sided gaps.}
The standard double-sample symmetrization proof uses a one-sided event.  The present analysis is therefore organized around one-sided ghost gaps.  Absolute-gap versions can be studied, but the strict separation already appears at the one-sided interface.

\paragraph{Relationship to Krapp--Wirth and empirical-process regularity.}
Krapp and Wirth's condition is a sufficient Borel-level condition; the present paper shows that the event-level completion condition suffices at the symmetrization interface and that the gap between the two is strict.  A complementary observation on the model-theoretic side is due to Krapp, Vermeil, and Wirth~\cite{KrappVermeilWirth2025}, who construct a first-order definable non-Borel set in a subfield of $\R$ with the independence property: even strong syntactic regularity assumptions on the indexing structure cannot substitute for explicit measurability hypotheses.  Their construction is the model-theoretic analogue of the strict separation we exhibit on the learning-theoretic side; both indicate that the right hypothesis lives at the descriptive-set-theoretic level rather than at the level of definability or Borel measurability of suprema.  Pollard/Dudley-style admissibility conditions already imply forms of universal measurability; our contribution is the modern explicit event-level formulation and the Lean-checked placement in the finite-VC symmetrization chain.  Parallel work on universal-learning frameworks~\cite{BousquetEtAl2021} extends FTSL-type characterizations along an orthogonal axis.

\paragraph{Formalization as a forcing device.}
The Lean development did not merely check a finished proof.  It forced the regularity theorem to be stated at the right level.  Once the bad event was only null-measurable, the Tonelli/indicator part of the proof had to be refactored through null-measurable indicator lemmas rather than Borel-set indicator lemmas.  This is the formal reason the distinction is not cosmetic.

\begin{openproblem}[Arbitrary targets]
Can the measurable-target bridge be extended to arbitrary targets without reintroducing a Borel supremum requirement?
\end{openproblem}

\begin{openproblem}[Measurability complexity]
Is there a useful quantitative invariant measuring the descriptive complexity of bad events under concept-class constructors, analogous to VC dimension on the combinatorial side?
\end{openproblem}

\begin{openproblem}[Multiclass labels]
What is the correct multiclass or real-valued analogue of the Borel--analytic bridge?  The finite-label case should follow the same finite-sum logic, but real-valued prediction and loss functions require a separate regularity analysis.
\end{openproblem}

\begin{openproblem}[Maximal closure algebra]
Characterize the largest natural algebra of concept-class constructors preserving $\WBm$.  The patching, countable interpolation, and fiber-product amalgamation theorems are lower bounds on this algebra, not a classification.
\end{openproblem}

\section*{Acknowledgments}
The author thanks the AI and Robotics Technology Park at the Indian Institute of Science for institutional support.

\appendix

\section{Lean artifact map}

The mathematical statements above are formalized in Lean~4 in the accompanying formal learning theory kernel~\cite{FLTKernel2026}, snapshotted at tag \texttt{v3.3.0-paper} (commit \texttt{e48c0f6}).  Table~\ref{tab:lean-map} records the theorem names corresponding to the paper statements.  Verification is via the comparator~\cite{Comparator}; see Appendix~\ref{sec:verification-standard}.

\begin{table}[h]
\centering
\scriptsize
\begin{tabular}{>{\raggedright\arraybackslash}p{0.20\textwidth}>{\raggedright\arraybackslash}p{0.39\textwidth}>{\raggedright\arraybackslash}p{0.33\textwidth}}
\toprule
Mathematical content & Lean name & File \\
\midrule
Analytic sets are null-measurable & \texttt{analyticSet\_nullMeasurableSet} & \texttt{PureMath/AnalyticMeasurability.lean} \\
Choquet compact capacity for analytic sets & \texttt{AnalyticSet.compactCap\_eq} & \texttt{PureMath/ChoquetCapacity.lean} \\
Borel witness graph & \texttt{paramWitnessSet\_measurable} & \texttt{Complexity/BorelAnalyticBridge.lean} \\
Analytic bad event & \texttt{borel\_param\_badEvent\_analytic} & \texttt{Complexity/BorelAnalyticBridge.lean} \\
Positive bridge & \texttt{borel\_param\_wellBehavedVCMeasTarget} & \texttt{Complexity/BorelAnalyticBridge.lean} \\
Patching closure & \texttt{patch\_borel\_param\_}\newline\texttt{wellBehavedVCMeasTarget} & \texttt{Complexity/BorelAnalyticBridge.lean} \\
Fixed interpolation closure & \texttt{interpClassFixed\_wellBehaved} & \texttt{Complexity/Interpolation.lean} \\
Countable interpolation closure & \texttt{interpClassCountable\_wellBehaved} & \texttt{Complexity/Interpolation.lean} \\
Amalgamation closure & \texttt{amalgClass\_wellBehaved} & \texttt{Complexity/Amalgamation.lean} \\
Strict separation witness & \texttt{analytic\_nonborel\_set\_}\newline\texttt{gives\_measTarget\_separation} & \texttt{Theorem/BorelAnalytic\-Separation.lean} \\
One-sided regularity layer & \texttt{WellBehavedVC}, \texttt{WellBehavedVCMeasTarget}, \texttt{KrappWirthWellBehaved} & \texttt{Complexity/Measurability.lean} \\
\bottomrule
\end{tabular}
\caption{Lean declarations corresponding to the main mathematical results.}
\label{tab:lean-map}
\end{table}

\section{Selected Lean statements}

This appendix gives representative Lean statements for the key formalized results.  The code below is excerpted for readability; the full proofs are in the repository files listed in Table~\ref{tab:lean-map}.

\begin{lstlisting}
theorem analyticSet_nullMeasurableSet
    {alpha : Type*}
    [TopologicalSpace alpha] [MeasurableSpace alpha]
    [BorelSpace alpha] [PolishSpace alpha]
    {mu : MeasureTheory.Measure alpha} [MeasureTheory.IsFiniteMeasure mu]
    {s : Set alpha} (hs : MeasureTheory.AnalyticSet s) :
    MeasureTheory.NullMeasurableSet s mu
\end{lstlisting}

\begin{lstlisting}
theorem paramWitnessSet_measurable
    {X : Type u} [MeasurableSpace X]
    {Theta : Type*} [MeasurableSpace Theta]
    (e : Theta -> Concept X Bool)
    (he : Measurable (fun p : Theta x X => e p.1 p.2))
    (c : Concept X Bool) (hc : Measurable c)
    (m : Nat) (eps : Real) :
    MeasurableSet (paramWitnessSet e c m eps)
\end{lstlisting}

\begin{lstlisting}
theorem borel_param_badEvent_analytic
    {X : Type u} [MeasurableSpace X] [TopologicalSpace X]
    [PolishSpace X] [BorelSpace X]
    {Theta : Type*} [MeasurableSpace Theta] [StandardBorelSpace Theta]
    (e : Theta -> Concept X Bool)
    (he : Measurable (fun p : Theta x X => e p.1 p.2))
    (c : Concept X Bool) (hc : Measurable c)
    (m : Nat) (eps : Real) :
    MeasureTheory.AnalyticSet (paramBadEvent e c m eps)
\end{lstlisting}

\begin{lstlisting}
theorem borel_param_wellBehavedVCMeasTarget
    {X : Type u} [MeasurableSpace X] [TopologicalSpace X]
    [PolishSpace X] [BorelSpace X]
    {Theta : Type*} [MeasurableSpace Theta] [StandardBorelSpace Theta]
    (e : Theta -> Concept X Bool)
    (he : Measurable (fun p : Theta x X => e p.1 p.2)) :
    WellBehavedVCMeasTarget X (Set.range e)
\end{lstlisting}

\begin{lstlisting}
theorem patch_borel_param_wellBehavedVCMeasTarget
    {X : Type u} [MeasurableSpace X] [TopologicalSpace X]
    [PolishSpace X] [BorelSpace X]
    {Theta1 Theta2 Rho : Type*}
    [MeasurableSpace Theta1] [StandardBorelSpace Theta1]
    [MeasurableSpace Theta2] [StandardBorelSpace Theta2]
    [MeasurableSpace Rho] [StandardBorelSpace Rho]
    (e1 : Theta1 -> Concept X Bool)
    (e2 : Theta2 -> Concept X Bool)
    (r : Rho -> Concept X Bool)
    (he1 : Measurable (fun p : Theta1 x X => e1 p.1 p.2))
    (he2 : Measurable (fun p : Theta2 x X => e2 p.1 p.2))
    (hr : Measurable (fun p : Rho x X => r p.1 p.2)) :
    WellBehavedVCMeasTarget X (Set.range (patchEval e1 e2 r))
\end{lstlisting}

\begin{lstlisting}
theorem amalgClass_wellBehaved
    {X : Type u} [MeasurableSpace X] [TopologicalSpace X]
    [PolishSpace X] [BorelSpace X]
    {Theta1 Theta2 S : Type*}
    [MeasurableSpace Theta1] [StandardBorelSpace Theta1]
    [MeasurableSpace Theta2] [StandardBorelSpace Theta2]
    [MeasurableSpace S] [StandardBorelSpace S]
    (pi1 : Theta1 -> S) (pi2 : Theta2 -> S)
    (hpi1 : Measurable pi1) (hpi2 : Measurable pi2)
    (merge : Theta1 x Theta2 -> Concept X Bool)
    (hmerge : Measurable (fun p : (Theta1 x Theta2) x X =>
      merge p.1 p.2)) :
    WellBehavedVCMeasTarget X (amalgClass pi1 pi2 merge)
\end{lstlisting}

\begin{lstlisting}
theorem analytic_nonborel_set_gives_measTarget_separation
    (A : Set Real)
    (hA_an : MeasureTheory.AnalyticSet A)
    (hA_non : not MeasurableSet A) :
    KrappWirthSeparationMeasTarget
\end{lstlisting}

\section{Verification standard}\label{sec:verification-standard}

The kernel and these results are verified with the comparator~\cite{Comparator}, which is the strictest standard used in the Lean~4 community to certify formalizations developed with AI assistance.

\bibliographystyle{plain}
\bibliography{refs}

\end{document}